\documentclass[letterpaper, 10 pt, conference]{ieeeconf}  % Comment this line out if you need a4paper

\usepackage{amsmath, amsfonts, amssymb, amsthm}
\usepackage{algorithm, algpseudocode}
\usepackage{mathptmx}
\usepackage{graphicx}
\usepackage{xcolor}
\usepackage{color}
\usepackage{url}
\usepackage{hyperref}
\hypersetup{pdfborder = {0 0 0}}
\usepackage{booktabs}
\usepackage{multirow}

\IEEEoverridecommandlockouts                              % This command is only needed if 
\title{\LARGE \bf
Safe Vision Language Action Models via Barrier Enhanced Flow Matching
}

\author{% <-this % stops a space
}

\author{Kasra Sinaei, Hung-Chieh Wu and Donald Ebeigbe% <-this % stops a space
\thanks{The authors are with the Department of Electrical Engineering, The Pennsylvania State University; {\tt\small kasra@psu.edu}; {\tt\small hungchieh@psu.edu}; {\tt\small ebeigbe@psu.edu}}%
}

\theoremstyle{plain} 
\newtheorem{thm}{Theorem}
\newtheorem{definition}{Definition}
\newtheorem{remark}{Remark}
\newtheorem{assumption}{Assumption}
\newtheorem{corollary}{Corollary}
\begin{document}

\maketitle
\thispagestyle{empty}
\pagestyle{empty}

%%%%%%%%%%%%%%%%%%%%%%%%%%%%%%%%%%%%%%%%%%%%%%%%%%%%%%%%%%%%%%%%%%%%%%%%%%%%%%%%
\begin{abstract}
This article presents a modular inference framework that integrates {\color{black}Flow Matching generative} models with formal Control Barrier Function (CBF) safety guarantees. Unlike existing methods that apply external safety filters to a model's final output, our approach modifies the Flow Matching denoising process within the model to inherently generate safe trajectories. By employing a smooth Log-Sum-Exponential aggregate barrier, we enforce safety over entire action chunks. {\color{black}This aggregate barrier ensures a minimal increase in computational overhead and does not alter the semantic intent of the model.} We show that, within the proposed framework, the 2-Wasserstein distance between the generated distribution and the target distribution remains bounded. Our method eliminates the need for safety-specific datasets or costly model retraining, providing a versatile solution for safe {\color{black}inference. We validate the approach on two robotic manipulation platforms and a 2D navigation benchmark, verifying} that our framework achieves reliable safety without degrading the success rate of the model.
\end{abstract}

%%%%%%%%%%%%%%%%%%%%%%%%%%%%%%%%%%%%%%%%%%%%%%%%%%%%%%%%%%%%%%%%%%%%%%%%%%%%%%%%
\section{Introduction}

\subsection{Generalist Robotic Policies}
Large behavioral models have revolutionized robotics, driven by rapid advances in transformers \cite{vaswani2017attention}, Vision-Language Models (VLM), Diffusion Policies \cite{ho2020denoising}, multi-modal datasets \cite{capuano2025robot}, and open-source robot trajectory datasets \cite{liu2023libero}. Generalist robotic policies are no longer far-fetched, as state-of-the-art methods can perform complex robotic manipulation and locomotion tasks using only inference from fine-tuned foundation models \cite{intelligence2025pi_}. Researchers have addressed challenges such as inference time, vision–action–language integration, robustness, and low success rates to develop novel architectures for robot control policies. Early efforts focused on imitation learning, leveraging large datasets tailored to specific robots and tasks. Subsequent work with action chunking transformers (ACT) \cite{zhao2023act} and diffusion policies significantly improved both performance and model architecture. More recently, Vision-Language-Action (VLA) models have been introduced to the robotics community and rapidly adopted by researchers and engineers developing robots that tackle complex real-world tasks. In addition to improved performance, the size of these newer open-source models has decreased substantially. For instance, the closed-source RT-2-X \cite{zitkovich2023rt}, one of the earlier models, has 55B parameters, whereas OpenVLA, introduced in 2024, reduced the policy size to 7B parameters \cite{kim2024openvla}, and SmolVLA \cite{shukor2025smolvla}, introduced in 2025, further reduced it to 550M parameters, making it lightweight enough that contemporary portable computers can handle both fine-tuning and inference.

Modern VLAs like Physical Intelligence $\pi_{0}$ and SmolVLA consist of two transformers with separate sets of weights, connected via a block-wise causal attention mask. The first set of transformer weights belongs to a conventional VLM, which provides strong semantic and visual understanding of the image and language inputs, while the second set of transformer weights is a smaller model called the Action Expert (AE). The action expert is essentially a subset of the model's weights that is responsible for processing the system's proprioceptive states and generating action chunks. Chen et al. {\color{black}\cite{lipman2022flow}} proposed a flow matching algorithm as an optimized denoising procedure to be used for generation; this algorithm reduces the number of denoising steps and makes real-time, high-frequency inference possible. This architecture provides an effective foundation model that takes both language tokens and camera frames as input to generate control actions for robots in real time.
\begin{figure}[t]
    \centering
    \includegraphics[width=0.99\linewidth]{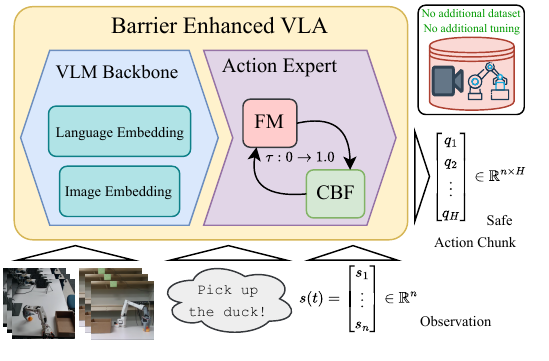}
    \caption{High-level block diagram of a safe VLA with an additional CBF block that adjusts generated action chunks for safety considerations.}
    \label{fig_summary}
\end{figure}
\subsection{Safety-critical Control Systems}
In the literature of control systems, a control design problem that has safety constraints or metrics on top of its tracking or stability requirements is considered a safety-critical control problem \cite{ames2016control}. The main focus of nonlinear system design is the stability of the states and developing a formal proof for its robust convergence to a specific target state. Safety controllers were proposed later on to address the problem of safety-critical control design. Control barrier functions (CBF) are one of the most common tools for designing safety filters in the control engineering community \cite{ames2016control}. CBFs provide a formal guarantee of safety through set invariance theorems and are compatible with different control practices such as robust control, adaptive control \cite{11456511}, and {\color{black}data-driven control \cite{zheng2025robust}}. Despite their advantages, enforcing safety via CBF has its own caveats. CBFs require a decent knowledge of the system model, and they usually result in conservative controllers. Researchers have addressed these challenges to introduce more flexibility into the CBF implementation and broaden its applications. Nowadays,  we see different variations of CBF applied to {\color{black}dynamics-free controllers} with acceptable conservative management. This makes the CBF a practical solution for enforcing safety on VLA policies.
\subsection{\color{black}Related Work}
Researchers are attempting to integrate different safety filters with {\color{black} denoising generative models utilizing diffusion and flow matching. Mizuta et al. \cite{mizuta2024cobl} first integrated CBF with a diffusion-based planning framework and developed CoBL. Safe Diffuser \cite{xiao2023safediffuser} is another example of a safe planning framework that bridges between generative models and CBFs.} Yang et al. \cite{yang2025safeflowmatcher} attempted to integrate Finite-Time Convergence CBF into the flow matching process and showed some promising results with flow matching planners. Hu et al. \cite{hu2025vlsa} proposed to filter the VLA output using a quadratic program that enforces the CBF constraint. Zhang et al. \cite{zhang2025safevla} tried to bridge the gap between some of the existing safe reinforcement learning (safe RL) methods and also extended the benchmark problems to VLAs by addressing the requirements for photorealistic simulations. They showed that a safe VLA architecture could increase both the safe operation rate and the task success rate of the model. {\color{black} Post-hoc filtering could generate out-of-distribution samples and jeopardize the quality of the trajectories \cite{romer2025demonstrations} (our comparisons in Section \ref{sec_results} yield the same conclusion on post-hoc CBF filtering). This motivates our study to develop a computationally light CBF-based framework that enables safe trajectory generation for robotic policies and does not bring safety at the cost of large distribution shifts.}
\subsection{Summary of Contributions}
VLAs, despite their capabilities in generating trajectories for complex and abstract tasks such as tabletop operation, pick and place, cleaning, and laundry folding, are not yet fully integrated with safety-critical control. In this work, we develop a method that seamlessly bridges the gap between VLA policies and classical control safety guarantees, so that the resulting agent benefits simultaneously from the complex task planning and control capabilities of VLAs and the safety guarantees of CBFs. {\color{black}Our effort is towards increasing the efficiency of previous methods and addressing the challenges they face. Our framework does not encounter the trapping issue presented with methods like Safe Diffuser~\cite{xiao2023safediffuser}, nor does it need additional guidance like Safe Flow Matcher \cite{yang2025safeflowmatcher}. Most of the state-of-the-art safe planners, like Safe Flow \cite{dai2025safeflow}, only optimize for safety; herein, we aim to involve smoothness in the problem formulation and use a single aggregate constraint to reduce the computational cost of CBF.} The main contributions of this work are summarized as follows:
\begin{enumerate}
    \item Integrating CBF with the generative policy by modifying the FM inference.
    \item {\color{black} Achieving provable safety and analyzing the conditions of guaranteed safe generation.}
    \item {\color{black}Aggregating safety constraints over action chunks and optimizing for trajectory smoothness.}
    \item {\color{black}Validating the performance of the framework in several hardware experiments and comparing against state-of-the-art methods in a 2D game.}
\end{enumerate}
%%%%%%%%%%%%%%%%%%%%%%%%%%%%%%%%%%%%%%%%%%%%%%%%%%%%%%%%%%%%%%%%%%%%%%%%%%%%%%%%
\section{Background and Preliminaries}
\subsection{Flow Matching Vision Language Action Models}
In this work, we focus on enforcing safety for modern VLAs that use Flow Matching for the generative process. These models consist of an Action Expert (AE) and a VLM backbone \cite{intelligence2025pi_}. The VLA transformer is trained on data sets containing language input, high-frequency image frames, and the robot's proprioceptive feedback (e.g., joint positions). The generative process {\color{black}aims} to approximate a flow velocity that transforms some known distribution $\pi_0$ into the distribution of dataset $\pi_1$. The deterministic continuous flow that transforms $\pi_0$ to $\pi_1$ is denoted by $\psi:[0, 1]\times \mathcal{Z} \rightarrow \mathcal{Z}$ and the corresponding vector-field is denoted by $v: [0, 1] \times \mathcal{Z} \rightarrow \mathcal{Z}$. We use $\mathcal{Z}$ for referring to the space of generated samples. The following ordinary differential equation (ODE) describes the denoising process \cite{chen2023flow}.
\begin{align}
    {\color{black}\frac{d}{d\tau}\psi (\tau, z) = v\left(\tau, \psi(\tau, z)\right);\;\; \psi(0, z) = z}\label{eq_fm}
\end{align}
Herein, $\tau \in [0, 1]$ is the denoising time where $z(\tau = 0)$ corresponds to the noisy sample drawn randomly from a normal distribution or a beta distribution and $z(\tau = 1.0)$ is the smooth denoised action chunk. {\color{black}In practice,} deriving the true vector-field $v^*(\tau, z)$ explicitly is {\color{black}challenging}, so its approximation {\color{black}$v_\theta(\tau, z) \simeq v^*(\tau, z)$} with parameter $\theta$ is used instead. 

% One of the most popular open source VLAs is the $\pi_0$ model developed by Physical Intelligence and made open source in 2025. This model showed an impressive success rate (around 80\% and higher) across a wide variety of robotic manipulation tasks. Similar to most of VLAs, this model also requires massive amounts of data and training steps for proper deployment. Another popular open source model is SmolVLA, which challenges the success rate of some larger models like OpenVLA \cite{kim2024openvla} and $\pi_0$ with only a fraction of their size.

\subsection{Control Barrier Functions}
{\color{black}Consider a nonlinear} system with state vector $x \in X \subset \mathbb{R}^n$ is modeled with the following ODE.
\begin{align}
    \frac{d}{dt} x(t) = f(x) + g(x)u \label{eq_dyn_sys}
\end{align}
This equation represents a control-affine nonlinear system where both $f(.)$ and $g(.)$ are assumed to be Lipschitz continuous functions and $u \in U \subset \mathbb{R}^m$ is the input of the system. According to the nonlinear control theorems \cite{khalil2002nonlinear}, if the control input $u$ remains Lipschitz continuous, then for any $x_0$ the system has a unique solution $x(t)$ for all $t \ge 0$. The system under control $u = k(t)$ is considered safe with respect to the safe set $\mathcal{C} \subset \mathcal{X}$ if $x(t) \in \mathcal{C}$ holds true for all $t \ge 0$. In the control barrier function literature, the safe set is defined as the 0-superlevel set of a continuously differentiable function $h: \mathbb{R}^n \rightarrow \mathbb{R}$ as shown below.
\begin{align}
    \mathcal{C} = \{x \in \mathcal{X}: h(x) \ge 0\} \label{eq_safe_set}
\end{align}
\begin{definition}\label{def_cbf}
    (Exponential CBF \cite{molnar2021model}) Any continuously differentiable function $h(.):\mathcal{X}\rightarrow \mathbb{R}$ that satisfies the following inequality {\color{black}for all $x \in \mathcal{X}$ and} some $\alpha \in \mathbb{R}_+$ is an Exponential barrier function for system (\ref{eq_dyn_sys}).
    \begin{align}\label{eq_cbf_continuous}
        \sup_{u \in \mathcal{U}} \dot h(x, u) \ge -\alpha h(x)
    \end{align}
\end{definition}
\begin{thm}\label{thm_cbf} 
    \cite{ames2016control} If $h(.)$ is a barrier function for the system (\ref{eq_dyn_sys}) and $x(0) \in \mathcal{C}$, {\color{black}then any Lipschitz continuous controller $k(x) \in \mathcal{U}$ satisfying $\dot{h}(x, k(x)) \ge -\alpha h(x)$ ensures safety of the closed-loop system with respect to safe set \eqref{eq_safe_set}.}
\end{thm}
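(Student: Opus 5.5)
The plan is a comparison-lemma argument applied to the scalar function $t \mapsto h(x(t))$ along closed-loop trajectories. First, well-posedness. Since $f$, $g$ and the controller $k$ are Lipschitz continuous, the closed-loop vector field $F(x) = f(x) + g(x)k(x)$ is locally Lipschitz. Local existence and uniqueness (as cited from \cite{khalil2002nonlinear}) then give a unique solution $x(t)$ with $x(0)\in\mathcal{C}$ on a maximal interval $[0, t_{\max})$. Along this solution, $\eta(t) := h(x(t))$ is continuously differentiable, because $h$ is $C^1$ and $x(\cdot)$ is $C^1$.

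Second, the differential inequality. Using the chain rule and the hypothesis on $k$, we get
\begin{align*}
\dot\eta(t) = \nabla h(x(t))^\top\big(f(x(t)) + g(x(t))k(x(t))\big) = \dot h(x(t),k(x(t))) \ge -\alpha\, \eta(t).
\end{align*}
Rather than invoke the general comparison lemma, I would use an integrating factor. Set $\phi(t) = e^{\alpha t}\eta(t)$. Then $\dot\phi(t) = e^{\alpha t}(\dot\eta + \alpha\eta) \ge 0$, so $\phi$ is nondecreasing. It follows that $\eta(t) \ge e^{-\alpha t}\eta(0) = e^{-\alpha t}h(x(0)) \ge 0$ for all $t\in[0,t_{\max})$. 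Hence $x(t)\in\mathcal{C}$ on the whole existence interval, i.e., $\mathcal{C}$ is forward invariant.

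The main obstacle is extending this to all $t\ge 0$, which is what the statement implicitly requires by ``safety''. The argument above only certifies safety on $[0,t_{\max})$. I would handle this in one of two ways. The first is to adopt the paper's stated standing assumption that Lipschitz $u$ yields solutions for all $t\ge0$, which makes $t_{\max}=\infty$. The second, if $\mathcal{C}$ is compact, is to note that the trajectory stays in the compact set $\mathcal{C}$ and therefore cannot escape in finite time, so $t_{\max}=\infty$ by the standard extension theorem.

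A minor subtlety is that the hypothesis is needed only along the trajectory. We also need $k(x)\in\mathcal{U}$, so that the bound from Definition~\ref{def_cbf} is meaningful. Both are given, so no further work is required there.
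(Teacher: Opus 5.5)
The paper gives no proof of this statement. It imports the result from \cite{ames2016control} and uses it as a black box, so your argument can only be compared with the standard proofs in the CBF literature. Your integrating-factor proof is correct. The usual arguments either apply Nagumo's theorem or the comparison lemma. Nagumo's theorem checks only $\dot h \ge -\alpha h(x) = 0$ on $\partial\mathcal{C}$, and needs extra regularity such as $\nabla h \neq 0$ on $\partial\mathcal{C}$ to identify the tangent cone. The comparison lemma is usually stated with a general extended class-$\mathcal{K}$ function. Your route is the linear-$\alpha$ special case of the comparison lemma. Because it uses the inequality along the whole trajectory rather than only on the boundary, it needs no boundary regularity. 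It also gives the explicit estimate $h(x(t)) \ge e^{-\alpha t} h(x(0))$. The paper later uses exactly this bound without justification, in the second step of the proof of Theorem~\ref{thm_fmcbf} ($h(\psi(z,1.0)) \ge h(\psi(z_{\tau_s},\tau_s))e^{-\alpha(1-\tau_s)}$).

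Two small remarks. First, the CBF property of $h$ from Definition~\ref{def_cbf} plays no role in your argument; only the inequality assumed for $k$ does. So your closing comment about needing $k(x)\in\mathcal{U}$ for that definition to be meaningful is unnecessary, since the CBF property only guarantees that such controllers can exist. Second, the paper's claim that Lipschitz feedback yields solutions for all $t\ge 0$ does not follow from Lipschitz continuity of $f$, $g$, $k$. For example, $f=0$, $g(x)=x$, $k(x)=x$ gives $\dot x = x^2$, which escapes in finite time. Your first option is therefore an added forward-completeness hypothesis, not a consequence of the setup. Your compactness argument, or reading safety as invariance on the maximal interval of existence (as is standard in the CBF literature), handles the extension rigorously.
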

CBFs serve as forward invariance certificates according to Theorem \ref{thm_cbf}. Control engineers use CBF to formulate quadratic programs (QP) that generate safe control inputs \cite{ames2016control}. 
\subsection{\color{black}End-to-end CBF Safety Filter}
Various modifications to CBF allow for model-free safety control of systems \cite{sinaei2025safety, molnar2021model}. In this section, we will formulate a plug-and-play safety filter that could be used as an external filter that modifies generated trajectories before sending them to the robot's low-level controller as the reference input. Consider we have a position-controlled robotic system with an asymptotically stable position controller. {\color{black}The safety-filter (\ref{eq_end2end_filter}) can filter the given input trajectory $q_d$, if the tracking controller is exponentially stable \cite{sinaei2025safety}.}
{\color{black}\begin{align}
    \color{black}q_s =& \arg\min_{q^*}\; (q^* - q_d)^T(q^* - q_d) \label{eq_end2end_filter} \\
    &\text{s.t.} \quad \nabla h \left(\frac{q^* - q}{\Delta t}\right) \ge -\alpha h(q) \nonumber
\end{align}} 
where $q_s$ is the safe position generated by the filter, $q_d$ is the desired {\color{black}next state}, {\color{black}and $q$ is the current state of the robot and $\Delta t$ is the sampling time of the controller}. The safety filter of (\ref{eq_end2end_filter}) is computationally fast and could be applied to the actions drawn from the action queue of a synchronous or asynchronous inference loop. Although effective at safety enforcement, this method is highly susceptible to the performance of the tracking controller. Some restrictions apply to the choice of parameter $\alpha$, which are critical to the safety of the system {\color{black}\cite{sinaei2025safety}}.
\subsection{Problem Formulation}
In this study, we propose and evaluate a barrier-enhanced flow matching process to be used in place of (\ref{eq_fm}) such that it enforces some safety constraint through barrier function quadratic programs (QP). The first challenge of integrating CBFs with flow matching is the lack of a system model, so we are looking into {\color{black}dynamics-free} approaches towards implementing CBF \cite{molnar2021model, sinaei2025safety}. We also need to address how to define barrier functions for action chunks. Safety description and safe set (\ref{eq_safe_set}) are regularly defined on the system's state, not a trajectory; this complicates the already challenging problem of crafting barrier functions for arbitrary safety descriptions. {\color{black} Our development is accompanied by a comparison study against some of the existing frameworks in hardware and simulation.}

Herein, generated action chunks with size $H$ are denoted by $z = \begin{bmatrix}q_1 & q_2 & \dots & q_H \end{bmatrix}$, and $q_i \in \mathbb{R}^n$ are individual joint space goal positions. Our goal is to develop a filter that enforces safety on the entire generated trajectory $z$ without destabilizing the overall denoising flow $v_\theta$.
%%%%%%%%%%%%%%%%%%%%%%%%%%%%%%%%%%%%%%%%%%%%%%%%%%%%%%%%%%%%%%%%%%%%%%%%%%%%%%%%
\section{\color{black} Main Method}
{\color{black}In this section, we first define safety for the action chunks and formulate it efficiently using a Log-sum-exponent formula that helps speeding up the safe inference (Theorem \ref{thm_lse}). Next, we develop the filtering method and demonstrate its integration with denoising models. Finally, we provide a safety guarantee (Theorem \ref{thm_fmcbf}) and show that the modified flow-matching process preserves a bounded distributional error (Theorem \ref{thm_fm_error}).}
\subsection{Barrier Enhanced Flow Matching}
The state space of \eqref{eq_end2end_filter} is defined on the robot's state $x$, not an entire trajectory (e.g. action chunk $z$). Before proceeding to the safety filter design, we need to demonstrate a systematic approach towards defining the barrier function for the action chunk that allows fast computation and efficient implementation. First, we extend the safety notion from a single action to an action chunk by the following definition. 
\begin{definition}\label{def_safe_chunk}
    (Safe Action Chunk) An action chunk $z_{t:t+H} \in \mathbb{R}^{H \times n}$ of size H is safe with respect to the safe set $\mathcal{C}$ defined via a barrier function $h(.)$, if all the states in the trajectory are safe.
    \begin{align}
        h(q_i) \ge 0 \quad \forall i \in \{ t, t+1, \dots, t+H\} \label{eq_safe_chunk} \\
        h(q_{t:t+H}) = \min_{i \in \{t, ..., t+H\}} h(q_i)
    \end{align}
\end{definition}
The main challenge with defining CBF using (\ref{eq_safe_chunk}) is that the $\min h(.)$ operator does not necessarily produce a {\color{black}continuously differentiable} barrier function \cite{molnar2023composing}. On top of the continuity problem, the barrier function composed of all states of the trajectory might end up being highly nonconvex and have a complex explicit form. Instead of defining the barrier function using the min operator, we utilize the smooth log-sum-exponential function (\ref{eq_log_exp}) to combine and smooth the resulting barrier function, which represents an approximated safe action chunk. Note that we flatten action chunk $z$ to make the dimensions of the equations consistent through QP formulation and denote the flattened $z \in \mathbb{R}^{H \times n}$ by $\breve{z}\in \mathbb R^{nH}$.
\begin{align}
    h(\breve z) = -\frac{1}{\kappa} \ln \left( \sum_{i = t}^{t+H} e^{-\kappa h(q_i)}\right) \label{eq_log_exp}
\end{align}
The smoothing parameter $\kappa$ could be any positive real value. {\color{black}Small values of $\kappa$ result in smoother approximated sets, while larger values deliver a more accurate approximation of the min(.) operator}. 
\begin{thm}\label{thm_lse}
    Consider a safe action chunk defined by \eqref{eq_safe_chunk}. Function $h(.)$ in \eqref{eq_log_exp} under-approximates the safe action chunk defined by the min operator \eqref{eq_safe_chunk} with the following bounds:
    \begin{align}
        \min_{i \in \{t, ..., t+H\}} h(q_i) - \frac{\ln H}{\kappa} \le h(\breve z) \le \min_{i \in \{t, ..., t+H\}} h(q_i) \label{eq_logsumexp_bound}
    \end{align}
\end{thm}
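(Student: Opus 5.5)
The plan is to sandwich the sum inside the logarithm in \eqref{eq_log_exp} between its largest term and a multiple of it, then apply the map $s \mapsto -\frac{1}{\kappa}\ln s$. This map is strictly decreasing on $(0,\infty)$ because $\kappa>0$, so it reverses the inequalities and gives both bounds in \eqref{eq_logsumexp_bound} at once. No earlier result of the paper is needed beyond the definitions in Definition~\ref{def_safe_chunk} and \eqref{eq_log_exp}.

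Concretely, I set $m = \min_i h(q_i)$ and $S = \sum_i e^{-\kappa h(q_i)}$, and let $N$ be the number of states in the chunk. Since $x \mapsto e^{-\kappa x}$ is decreasing, each term satisfies $e^{-\kappa h(q_i)} \le e^{-\kappa m}$, which gives $S \le N e^{-\kappa m}$. The index $i^\star$ attaining the minimum contributes exactly $e^{-\kappa m}$, and every other term is positive, which gives $S \ge e^{-\kappa m}$. Hence
\begin{align*}
e^{-\kappa m} \le S \le N e^{-\kappa m}.
\end{align*}
Applying $-\frac{1}{\kappa}\ln(\cdot)$ then yields
\begin{align*}
m - \frac{\ln N}{\kappa} \le h(\breve z) \le m .
\end{align*}
The right inequality is the under-approximation claim. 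It shows that $h(\breve z)\ge 0$ implies $h(q_i)\ge 0$ for all $i$, so the approximated safe set is contained in the set of safe action chunks.

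The only delicate step is the count $N$. The chunk has size $H$ by the problem formulation, so with $N=H$ the bound is exactly $\ln H/\kappa$. The index range $\{t,\dots,t+H\}$ as literally written, however, contains $H+1$ elements and would give $\ln(H+1)/\kappa$. I would fix the convention that the chunk $z_{t:t+H}$ consists of the $H$ states $q_1,\dots,q_H$ listed in the problem formulation, and state the proof with $N=H$. I would add a remark that under the inclusive indexing the same argument goes through with $H$ replaced by $H+1$. Finally, I would note that the gap $\ln H/\kappa$ vanishes as $\kappa\to\infty$, which is consistent with the paper's remark on the role of $\kappa$.
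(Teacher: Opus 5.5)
Your proof is correct and is essentially the argument the paper relies on: the paper cites the appendix of \cite{molnar2023composing}, which likewise sandwiches the sum between its largest term and $N$ times that term and then applies the decreasing map $s\mapsto -\frac{1}{\kappa}\ln s$. Your remark on the index count is also right. With the literal inclusive range $\{t,\dots,t+H\}$ the sharp constant is $\ln(H+1)/\kappa$, with equality when all $h(q_i)$ coincide, so the stated $\ln H/\kappa$ does need the $H$-state convention $q_1,\dots,q_H$ that you adopt.
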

Proof of this theorem can be found in the appendix of \cite{molnar2023composing}, which utilizes the monotonicity property of the $\ln (.)$ function. 
\begin{remark}
    Tuning the smoothing parameter $\kappa$ to a higher value results in a smaller safety margin in the safety filter design. This is reflected in Fig. \ref{fig_kappa_comp} where the value of the barrier function is plotted in four inferences of safe VLA with different values of $\kappa$. The red region  of Fig. \ref{fig_kappa_comp} depict unsafe states, and higher positive values of $h(q)$ correspond to larger safety margins. This observation matches the under-approximation error bound $\frac{\ln H}{\kappa}$ in \eqref{eq_logsumexp_bound}.
\end{remark}
For implementation purposes, we calculate the partial derivative of (\ref{eq_log_exp}) with respect to flattened action chunk $\breve z$ by:
\begin{align}
    \frac{\partial h(\breve z)}{\partial \breve z} = \begin{bmatrix} \lambda_1 \nabla_{q}h(q_1) & \dots & \lambda_H \nabla_{q}h(q_H)\end{bmatrix}\label{eq_log_exp_deriv}
\end{align}
where we have used $\lambda_i = e^{-\kappa (h(q_i)-h(z))}$ for ease of notation. Note that $\nabla_{q}h(.) \in \mathbb{R}^{1 \times n}$ is the gradient of the barrier function with respect to the robot's state $q$. Now, with this definition of safety for action chunks (\ref{eq_safe_chunk}) and the smooth log-sum-exponent approximation formula for deriving the barrier function (\ref{eq_log_exp}), we can proceed to formulating the CBF for the flow matching.

Recall the flow matching equation (\ref{eq_fm}); in this section, we modify this equation by introducing an additional term $\delta_s \in \mathbb{R}^{H \times n}$ to the original ODE. The main purpose of this variable is to make minimal adjustments such that the action chunk is safe according to Definition \ref{def_safe_chunk}. This objective could be formulated as a QP that is constrained by the control barrier function inequality. 
\begin{align}
    {\color{black}\frac{d}{d\tau}\psi_s (z, \tau) = v(\tau, \psi(z, \tau)) + \delta_s}\label{eq_fm_enhanced}
\end{align}

Inspired by the CBF QP, we first propose the following optimization problem for finding the optimal value of $\delta_s$. 
\begin{align}
    \breve\delta_s =& \arg \min_{\delta \in \mathbb{R}^{nH}} \;\;\frac{1}{2} \| \delta \|_2^2 \label{eq_fm_cbf}\\
    & \text{s.t.}\quad \frac{\partial h}{\partial z}(\breve v_\theta( \tau, z)+\delta) \ge -\alpha h(z) \nonumber
\end{align}
Since we flattened the action chunk $z$ in the previous section, we also use a flattened equivalent vector of $\delta_s$ in the QP \eqref{eq_fm_cbf} and denote it with $\breve \delta_s \in \mathbb{R}^{nH}$. The original form could be recovered from the flattened vector.
The optimization problem \eqref{eq_fm_cbf} is feasible as long as the safe set defined by $h$ is non-empty and its solution could be obtained by applying the Karush–Kuhn–Tucker (KKT) conditions \cite{boyd2004convex}:
\begin{align}
    \delta_s = \frac{\text{ReLU} \big(-\alpha h(z) - \nabla_z h(z) v_\theta(\tau, z)\big)}{\|\nabla_z h\|^2}\label{eq_qp_sol}\nabla_z h
\end{align}
\subsection{\color{black} Distribution Shift Analysis}
Benton et al. \cite{benton2023error} studied the error bound of {\color{black}flow matching methods} and found an upper bound on Wasserstein distance under some assumptions on the $L^2$ approximation error and certain regularity conditions. This bound is dependent on the quality of the approximated {\color{black}vector-field} and its Lipschitz continuity (Theorems 1-3 of \cite{benton2023error}). Assume that our {\color{black}target distribution} is $\pi_1$, while $\pi_0$ is a known {\color{black}easy to sample} distribution (e.g., Normal or Beta). {\color{black}The random process} that transforms the initial known distribution $\pi_0$ to the target distribution $\pi_1$ is {\color{black}denoted by $Y$ and is approximated with vector-field} $v_\theta$ such that if we take a sample $Y_0 \sim \pi_0$, then the distribution of $Y_1$ is approximately $\pi_1$. {\color{black}The generated distribution by $v_\theta$} is denoted by $\hat\pi_1$. The following assumptions are required for deriving an error bound on the 2-Wasserstein distance of $\pi_1$ and $\hat\pi_1$.
\begin{assumption}\label{ass_1}
    If we denote the true flow velocity between $\pi_0$ and $\pi_1$ by $v^*(\tau, z)$, the following bound on $L^2$ approximation error exists.
    \begin{align}
        \int_0^1 \mathbb{E}[\| v_\theta(\tau, z) - v^*(\tau, z)\|]^2 d\tau \le \varepsilon^2 \label{eq_l2_error}
    \end{align}
\end{assumption}
\begin{assumption}\label{ass_2}
    For each sample $z$ and denoising time $s \in (0, 1)$, there exists unique flows $(\psi_{s,t})_{s\le t\le 1}$ and $(\psi^*_{s,t})_{s\le t\le 1}$ such that their induced {\color{black}vector-fields} $v_\theta(\tau, z)$ and $v^*(\tau, z)$ are continuously differentiable with respect to all parameters.
\end{assumption}
\begin{assumption}\label{ass_3}
    The approximated {\color{black}vector-field $v_\theta$} is continuous in both $\tau$ and $z$; it is also {\color{black}locally} Lipschitz continuous in $z$ with some Lipschitz constant $L$ for all $\tau \in (0, 1)$.
\end{assumption}
The first assumption gives us an upper bound on the $L^2$ approximation error of the $v_\theta$, which is not restrictive for VLA generative frameworks. Assumption \ref{ass_2} is required for the existence and uniqueness of the solutions to ODE \eqref{eq_fm_enhanced}. The approximator $v_\theta$ could also be trained in a manner that satisfies the local Lipschitz condition of Assumption \ref{ass_3}.
\begin{thm}[Wasserstein Error Bound \cite{benton2023error}]\label{thm_fm_error}
    Assume $\pi_0$ and $\pi_1$ are some probability distributions on $\mathbb{R}^{Hn}$; $Y_\tau$ is the {\color{black}random process transforming} $\pi_0$ with vector-field $v_\theta$, and $\hat\pi_1$ is the law of $Y_1$. Then, under Assumptions \ref{ass_1}-\ref{ass_3}, the following inequality holds on the 2-Wasserstein difference of the true and approximated distributions.
    \begin{align}
        W_2(\hat\pi_1, \pi_1) \le \varepsilon \exp\left(\int_0^1L d\tau\right) \label{eq_was_bound}
    \end{align}
\end{thm}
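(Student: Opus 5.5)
The bound \eqref{eq_was_bound} follows from a synchronous coupling of the learned flow and the true flow, together with a Gr\"onwall (Alekseev--Gr\"obner type) estimate on how they separate.

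\textbf{Coupling.} Draw a single sample $Y_0 \sim \pi_0$ and push it through both ODEs from \eqref{eq_fm}:
\begin{align}
\frac{d}{d\tau} Y_\tau = v_\theta(\tau, Y_\tau), \qquad \frac{d}{d\tau} Y^*_\tau = v^*(\tau, Y^*_\tau), \qquad Y^*_0 = Y_0 . \nonumber
\end{align}
Assumption \ref{ass_2} guarantees that both flows $\psi_{s,t}$ and $\psi^*_{s,t}$ exist and are unique. The true flow transports $\pi_0$ to $\pi_1$, so the pair $(Y_1, Y^*_1)$ is a coupling of $\hat\pi_1$ and $\pi_1$. By the definition of the Wasserstein distance,
\begin{align}
W_2(\hat\pi_1,\pi_1) \le \left(\mathbb{E}\|Y_1 - Y^*_1\|^2\right)^{1/2}, \nonumber
\end{align}
and it remains to bound this second moment.

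\textbf{Error ODE.} Set $e_\tau = Y_\tau - Y^*_\tau$. Add and subtract $v_\theta(\tau, Y^*_\tau)$:
\begin{align}
\dot e_\tau = \big[v_\theta(\tau, Y_\tau) - v_\theta(\tau, Y^*_\tau)\big] + \big[v_\theta(\tau, Y^*_\tau) - v^*(\tau, Y^*_\tau)\big]. \nonumber
\end{align}
The first bracket has norm at most $L\|e_\tau\|$ by Assumption \ref{ass_3}. The second bracket is the pointwise approximation error, evaluated along the true trajectory; call it $r_\tau$.

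\textbf{Gr\"onwall step.} I would avoid differentiating $\|e_\tau\|^2$, since that produces a mixed term. Instead, use variation of constants through the Alekseev--Gr\"obner formula: with $D\psi_{s,1}$ the Jacobian of the $v_\theta$-flow, which exists by Assumption \ref{ass_2},
\begin{align}
e_1 = \int_0^1 D\psi_{s,1}\big(\cdot\big)\, r_s \, ds, \qquad \|D\psi_{s,1}\| \le \exp\left(\int_s^1 L\, d\tau\right). \nonumber
\end{align}
The Jacobian bound comes from applying Gr\"onwall to the variational equation, whose matrix has norm at most $L$. Taking $L^2(\mathbb{P})$ norms and applying Minkowski's integral inequality and then Cauchy--Schwarz in $s$ gives
\begin{align}
\left(\mathbb{E}\|e_1\|^2\right)^{1/2} \le \exp\left(\int_0^1 L\, d\tau\right) \int_0^1 \left(\mathbb{E}\|r_s\|^2\right)^{1/2} ds \le \exp\left(\int_0^1 L\, d\tau\right) \varepsilon . \nonumber
\end{align}
The last inequality uses Assumption \ref{ass_1}, read as $\int_0^1 \mathbb{E}\|r_s\|^2\, ds \le \varepsilon^2$, with the expectation taken along the true marginals $Y^*_s \sim \pi_s$. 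Combining this with the coupling inequality yields \eqref{eq_was_bound}.

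\textbf{Main obstacle.} The hardest point is that Assumption \ref{ass_3} gives only a local Lipschitz constant, while the Jacobian bound needs $\|\nabla_z v_\theta(\tau,\cdot)\| \le L$ along every trajectory that is integrated. The first fix I would try is to interpret $L$ as a bound valid on a forward-invariant region containing the supports of the flows, or to localize by stopping times and then pass to the limit. A secondary issue is that Assumption \ref{ass_1} is written with the square outside the expectation. The argument needs the second-moment form $\int_0^1\mathbb{E}\|r_s\|^2\, ds \le \varepsilon^2$, which is the form used in \cite{benton2023error}, and I would adopt that reading.
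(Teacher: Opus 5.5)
Your argument is correct and follows the proof the paper relies on: the paper gives no proof of its own and imports the result from \cite{benton2023error}, whose argument is the same synchronous coupling, Alekseev--Gr\"obner representation, Gr\"onwall bound $\|D\psi_{s,1}\|\le \exp(\int_s^1 L\,d\tau)$ on the variational equation, and Cauchy--Schwarz against the $L^2$ error assumption. Both caveats you flag are apt: Benton et al.\ assume the second-moment form $\int_0^1\mathbb{E}\|v_\theta-v^*\|^2\,d\tau\le\varepsilon^2$ and a uniform (not merely local) Lipschitz constant, and Assumptions~\ref{ass_1} and~\ref{ass_3} must be read that way for the stated $W_2$ bound to hold.
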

{\color{black}Now we show that the solution of CBF-QP \eqref{eq_fm_cbf}, shown in \eqref{eq_qp_sol},} satisfies Assumptions \ref{ass_1}-\ref{ass_3} with slightly inflated bounds such that the Wasserstein distance between {\color{black}$\hat\pi_1=\text{Law}(Y_1)$} and $\pi_1$ remains bounded. 
\begin{corollary}\label{cor_1}
    Under the assumption that the barrier function $h(z)$ is well-posed and the robot operates within a compact configuration space away from kinematic singularities, the $L^2$ error bound of the {\color{black}vector-field} $\hat{v}_\theta(\tau, z) = v_\theta(\tau, z) + \delta_s$ inflates to $\bar{\varepsilon}<\infty$, where
    $\bar{\varepsilon}^2 \le \varepsilon^2 + \int_0^1 \mathbb{E}[\|\delta_s(z, \tau)\|^2] d\tau $.
    If we denote the perturbed velocity field Lipschitz constant with $\bar{L} < \infty$, {\color{black}then $\bar{\varepsilon} \exp (\int_0^1 \bar L d\tau)$ is an upper bound on $W_2(\text{Law}(\psi_s(z, 1.0), \pi_1)$}.
\end{corollary}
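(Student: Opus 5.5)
The plan is to check that the perturbed field $\hat v_\theta = v_\theta + \delta_s$ satisfies Assumptions \ref{ass_1}--\ref{ass_3} with modified constants, and then apply Theorem \ref{thm_fm_error} directly to the process driven by $\hat v_\theta$, whose time-one law is $\text{Law}(\psi_s(z,1.0))$.

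\textbf{The $L^2$ error (Assumption \ref{ass_1}).} Write $\hat v_\theta - v^* = (v_\theta - v^*) + \delta_s$. The triangle inequality gives $\|\hat v_\theta - v^*\| \le \|v_\theta - v^*\| + \|\delta_s\|$. Squaring, taking expectations and integrating in $\tau$, then applying Minkowski in $L^2([0,1]\times\Omega)$, yields
\[
\bar\varepsilon \le \varepsilon + \Big(\int_0^1 \mathbb{E}\|\delta_s\|^2 d\tau\Big)^{1/2}.
\]
This does not give the stated additive form $\varepsilon^2 + \int \mathbb{E}\|\delta_s\|^2$ unless the cross term $2\int\mathbb{E}\langle v_\theta - v^*, \delta_s\rangle\,d\tau$ is nonpositive. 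I would try to argue that the cross term can be absorbed. Failing that, I would reinterpret $\bar\varepsilon$ as the Minkowski bound above, since the conclusion only needs $\bar\varepsilon<\infty$.

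\textbf{Finiteness of $\int_0^1 \mathbb{E}\|\delta_s\|^2 d\tau$.} From \eqref{eq_qp_sol},
\[
\|\delta_s\| = \frac{\text{ReLU}(\cdot)}{\|\nabla_z h\|} \le \frac{\alpha|h(z)| + \|\nabla_z h\|\,\|v_\theta\|}{\|\nabla_z h\|} .
\]
I interpret ``well-posed barrier'' and ``compact configuration space away from singularities'' as giving three facts on the relevant compact set. First, $h$ is $C^2$. Second, $\|\nabla_z h\| \ge c > 0$. Third, by Assumption \ref{ass_3} and compactness, $|h|$ and $\|v_\theta\|$ are bounded. Together these give $\|\delta_s\| \le \alpha \sup|h|/c + \sup\|v_\theta\|$, so the integral is finite. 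For the log-sum-exponential barrier \eqref{eq_log_exp}, the weights in \eqref{eq_log_exp_deriv} satisfy $\lambda_i \in (0,1]$ and $\sum_i \lambda_i = 1$, since $\lambda_i$ is a softmax weight. Hence a uniform lower bound on the per-state gradients $\|\nabla_q h(q_i)\|$ should transfer to $\|\nabla_z h\|$. This transfer needs a non-cancellation argument; it holds because the blocks of \eqref{eq_log_exp_deriv} occupy disjoint coordinates, so $\|\nabla_z h\|^2 = \sum_i \lambda_i^2\|\nabla_q h(q_i)\|^2 \ge \min_i\|\nabla_q h(q_i)\|^2/H$.

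\textbf{Regularity (Assumptions \ref{ass_2}--\ref{ass_3}).} This is the main obstacle. $\delta_s$ contains a ReLU, so it is Lipschitz but not continuously differentiable. Assumption \ref{ass_2} in the literal $C^1$ form therefore fails on the switching surface. I would first show local Lipschitz continuity. The map $z \mapsto -\alpha h - \nabla h\, v_\theta$ is locally Lipschitz, being a product and sum of Lipschitz, bounded functions on a compact set. ReLU is 1-Lipschitz. The map $z\mapsto \nabla h/\|\nabla h\|^2$ is Lipschitz because $\|\nabla h\|\ge c$ and $h\in C^2$. Products of bounded Lipschitz maps are Lipschitz, which gives a constant $\bar L \le L + L_\delta < \infty$.

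Local Lipschitz continuity already ensures existence and uniqueness of the flow $\psi_s$ via Picard--Lindel\"of, which is the real purpose of Assumption \ref{ass_2}. I would then either note that the Benton et al. argument uses only Lipschitz flows through a Gr\"onwall estimate on $\frac{d}{d\tau}\mathbb{E}\|Y_\tau - Y^*_\tau\|^2$, or replace the ReLU by a softplus smoothing and pass to the limit. With the assumptions verified for $\hat v_\theta$ with constants $(\bar\varepsilon, \bar L)$, Theorem \ref{thm_fm_error} gives $W_2(\text{Law}(\psi_s(z,1.0)),\pi_1)\le \bar\varepsilon\exp(\int_0^1\bar L\,d\tau)$.
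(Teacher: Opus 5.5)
Your proposal follows the paper's proof sketch essentially step for step: Minkowski for the inflated $L^2$ error, a gradient bounded away from zero to keep $\delta_s$ finite, the split of $\delta_s$ into a ReLU of a scalar activation times the projection $\nabla_z h/\|\nabla_z h\|^2$ to get $\bar L \le L + L_\delta$, and then Benton et al.'s theorem. Your transfer of the gradient lower bound through the softmax weights $\lambda_i$, and your handling of the $C^1$ requirement that the ReLU breaks, are extra rigor the paper skips.

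The discrepancy you flagged is real, and the paper's own sketch also only obtains the Minkowski form, so your fallback is the correct reading. The cross term cannot be absorbed in general. Since $\delta_s$ is a nonnegative multiple of $\nabla_z h^{\top}$, one has $\langle v_\theta - v^*,\delta_s\rangle>0$ wherever the filter is active and $\nabla_z h\, v^* < \nabla_z h\, v_\theta$. For example, $h=0$, $\nabla_z h=1$, $v_\theta=-1$, $v^*=-2$ gives $\delta_s=1$ and $\|\hat v_\theta - v^*\|^2 = 4 > 1+1$. Only if $v^*$ itself satisfies the CBF constraint does the projection property give $\langle v_\theta - v^*,\delta_s\rangle \le -\|\delta_s\|^2$, and then in fact $\bar\varepsilon\le\varepsilon$.
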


\begin{proof}[Proof Sketch]
By applying the Minkowski inequality, the $L^2$ error bound of the modified flow inflates to $\overline{\varepsilon} \leq \varepsilon + \sqrt{\int_0^1 \mathbb{E}[||\delta_s(z, \tau)||^2] d\tau}$. This inflated error bound remains strictly finite under the assumption that the barrier function $h(z)$ is well-posed. Specifically, the barrier's gradient $||\nabla_z h(z)||$ must be bounded strictly away from zero whenever the safety filter is active, which prevents $\delta_s$ from diverging to infinity (division by zero). Furthermore, $\hat{v}_\theta(\tau, z)$ is spatially Lipschitz continuous (Assumption \ref{ass_3}). The exact perturbation term $\delta_s(z, \tau)$ is decoupled into the product of a scalar activation defined by $-\alpha h(z) - \nabla_z h(z) v_\theta(\tau, z)$ passed through a $\text{ReLU}$ operator, and a geometric projection vector $\nabla_z h(z) / ||\nabla_z h(z)||^2$. Assuming the barrier function is twice-continuously differentiable and the robot operates within a compact configuration space away from kinematic singularities, the scalar activation before the $\text{ReLU}$ is bounded and Lipschitz continuous. Because the $\text{ReLU}$ function is globally $1$-Lipschitz, its composition with the scalar activation remains Lipschitz. The geometric projection vector is similarly bounded and Lipschitz continuous over the compact domain. Let $U_{max}$ and $G_{max}$ represent the supremum of the activation and projection terms respectively, with corresponding Lipschitz constants $L_u$ and $L_g$. Because the product of bounded Lipschitz functions on a compact domain is itself Lipschitz, the modified flow satisfies the regularity condition with a new finite Lipschitz constant $\overline{L} \leq L + U_{max} L_g + G_{max} L_u$. Consequently, we can invoke Theorem 3 to conclude that $W_2(Law(\psi_s), \pi_1)$ is upper bounded by $\overline{\varepsilon} \exp(\int_0^1 \overline{L} d\tau)$.
\end{proof}
{\color{black}Corollary \ref{cor_1} ensures that safety perturbations do not push the final output out-of-distribution.} Benton et al. \cite{benton2023error} discussed that if the target distribution has some nice regularity conditions, the bound \eqref{eq_was_bound} could become linear in terms of the Lipschitz constant. 
\subsection{Provable Safe Denoising\color{black}}
{\color{black}We apply the safety filter only during later denoising stages ($\tau \in [\tau_s, 1]$), as early intermediate states resemble unstructured noise lacking physical meaning for kinematic barriers.}
Theorem \ref{thm_fmcbf} provides a critical intuition over the choice of $\tau_s$ (See Figure \ref{fig_kappa_comp}) {\color{black}and provides safety guarantee for barrier-enhanced flow matching.} 
\begin{figure}
    \centering
    \includegraphics[width=0.8\linewidth]{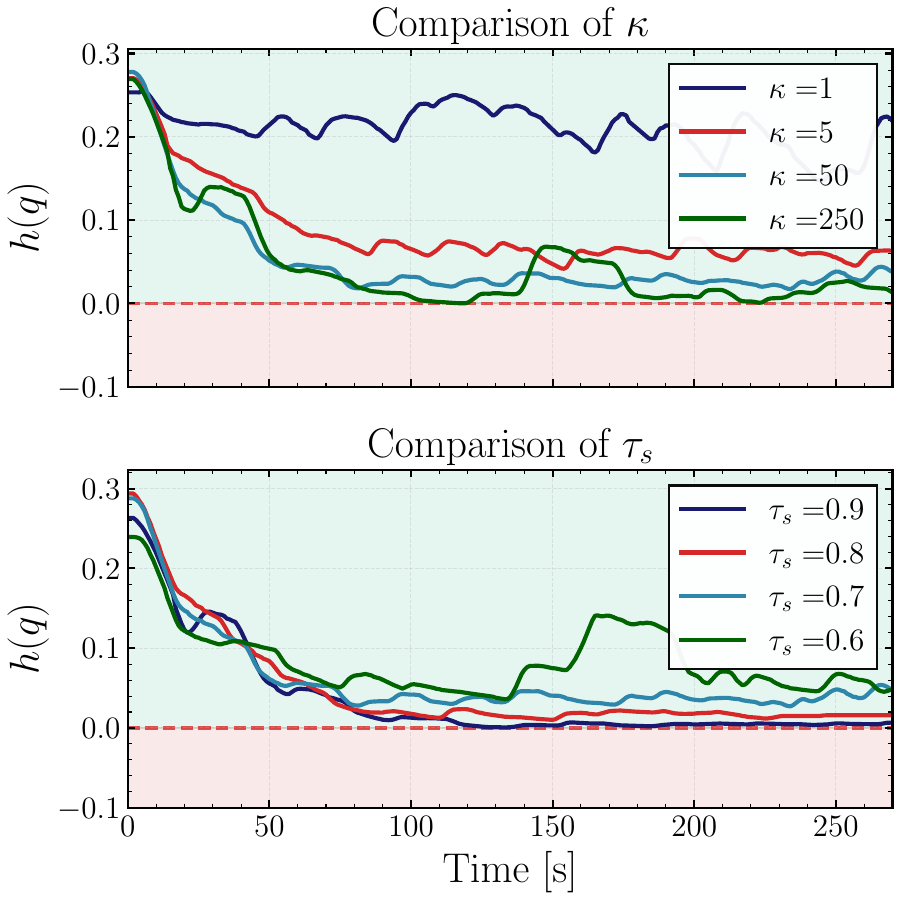}
    \caption{\color{black}Ablation study demonstrating the impact of hyper-parameters on the safety margin. Trajectories were recorded during a physical manipulation task using the SO-101 robot.}
    \label{fig_kappa_comp}
\end{figure}
\begin{thm}\label{thm_fmcbf}
    Assume the worst-case barrier decay rate of the flow $\psi(z_\tau, \tau)$ could be captured by a real positive value $M\in \mathbb{R}_+$ such that:
    \begin{align}
        \inf_{\psi(z, \tau)} \frac{dh}{d\tau}\bigg\rvert_{\psi(z, \tau)} = -M
    \end{align}
    and the barrier function value of the sampled noise is $h(\psi(z_0, \tau=0))=h_0$. If we apply CBF filtering via QP \eqref{eq_fm_cbf} for $\tau\in[\tau_s,\; 1.0]$, then the following lower bound exists for the value of the generated action chunk:
    \begin{align}
        h(\psi(z, 1.0)) \ge \left(h_0-M\tau_s\right)e^{-\alpha(1-\tau_s)}
    \end{align}
    Provided that the sampled noise is safe ($h_0 > 0$) and denoising starts such that $\tau_s \le \frac{h_0}{M}$ then \eqref{eq_fm_cbf} enforces safe flows. 
\end{thm}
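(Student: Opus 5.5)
The plan is to split the denoising interval $[0,1]$ at $\tau_s$ and track the scalar function $\eta(\tau) := h(\psi(z,\tau))$ along the (filtered) flow \eqref{eq_fm_enhanced}. The first phase, $\tau\in[0,\tau_s]$, runs the unfiltered flow \eqref{eq_fm}. The second phase, $\tau\in[\tau_s,1]$, runs the filtered flow with $\delta_s$ given by \eqref{eq_qp_sol}. On each phase I derive a differential inequality for $\eta$ and integrate it, then chain the two bounds at $\tau=\tau_s$. Existence and uniqueness of the trajectory (so that $\eta$ is absolutely continuous and the chain rule $\dot\eta = \nabla_z h\,\hat v_\theta$ holds) I take from Assumptions \ref{ass_2}--\ref{ass_3} together with the Lipschitz regularity of $\delta_s$ argued in Corollary \ref{cor_1}.

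\textbf{Phase one.} By the hypothesis $\inf \frac{dh}{d\tau} = -M$, we have $\dot\eta(\tau)\ge -M$ for almost every $\tau\in[0,\tau_s]$. Integrating from $0$ gives $\eta(\tau_s)\ge h_0 - M\tau_s$. This step is immediate.

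\textbf{Phase two.} For $\tau\ge\tau_s$, the QP constraint in \eqref{eq_fm_cbf} is satisfied by construction: the closed form \eqref{eq_qp_sol} yields $\nabla_z h\,(v_\theta+\delta_s) = \max\{\nabla_z h\, v_\theta,\ -\alpha h\}\ge -\alpha h$, which I verify by substituting \eqref{eq_qp_sol}. Hence $\dot\eta\ge-\alpha\eta$ on $[\tau_s,1]$. Apply the comparison lemma, i.e.\ note that $\frac{d}{d\tau}\big(e^{\alpha\tau}\eta\big)\ge 0$ and integrate from $\tau_s$ to $1$. This gives $\eta(1)\ge \eta(\tau_s)e^{-\alpha(1-\tau_s)}$. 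Substituting the phase-one bound then yields $h(\psi(z,1))\ge (h_0-M\tau_s)e^{-\alpha(1-\tau_s)}$. Since $e^{-\alpha(\cdot)}>0$, safety of the output reduces to requiring $h_0-M\tau_s\ge0$, i.e.\ $\tau_s\le h_0/M$ with $h_0>0$. The same argument, in the style of Theorem \ref{thm_cbf}, shows $\eta(\tau)\ge0$ throughout $[\tau_s,1]$, so the flow stays safe.

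\textbf{Main obstacle.} The delicate step is justifying the phase-two inequality rigorously. The closed form \eqref{eq_qp_sol} requires $\nabla_z h\neq 0$ whenever the ReLU is active, so I will invoke the well-posedness assumption from Corollary \ref{cor_1}, with the gradient bounded away from zero. I also need Lipschitz continuity of $\delta_s$ so that the comparison lemma applies to a genuine solution. A further subtlety concerns $h$ itself: when Definition \ref{def_safe_chunk} is imposed with the log-sum-exponential $h$ of \eqref{eq_log_exp}, one should note that this $h$ under-approximates the min (Theorem \ref{thm_lse}). Consequently $h(\breve z)\ge0$ implies that every $h(q_i)\ge0$, which transfers the bound to genuine chunk safety.
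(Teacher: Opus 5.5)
Your proposal is correct and follows the same route as the paper's proof. You integrate the worst-case decay $-M$ over $[0,\tau_s]$ to get $h \ge h_0 - M\tau_s$, then use the CBF inequality $\frac{dh}{d\tau} \ge -\alpha h$ enforced by the QP on $[\tau_s,1]$ to pick up the factor $e^{-\alpha(1-\tau_s)}$ and chain the two bounds. Your extra steps are sound refinements that the paper leaves implicit: the check that the closed form gives $\nabla_z h\,(v_\theta+\delta_s)=\max\{\nabla_z h\, v_\theta,\,-\alpha h\}$, the integrating-factor comparison, the nonvanishing-gradient assumption, and the transfer to per-state safety via the log-sum-exp under-approximation of the min. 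You also correctly conclude $h \ge 0$ rather than the paper's strict $h>0$ in the boundary case $\tau_s = h_0/M$.
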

\begin{proof}
    Let us assume the worst-case scenario for the first part of the flow matching, where the CBF filter is not active by setting $\frac{dh}{d\tau}=-M$.
    \begin{align*}
        h(\psi(z_{\tau_s}, \tau_s)) &= h_0 + \int_0^{\tau_s} \frac{dh}{d\tau}\bigg\rvert_{\psi(z, \tau)} d\tau \ge h_0 + \int_0^{\tau_s} -M d\tau \\
        &\ge h_0 - M \tau_s
    \end{align*}
    For the second part of the flow matching, where the filter is active, {\color{black}the QP enforces the differential CBF inequality $\frac{dh}{d\tau}\ge-\alpha h(\psi(z), \tau)$, yielding the following lower bound:}
\begin{align*}
    h(\psi(z, 1.0)) &\ge h(\psi(z_{\tau_s}, \tau_s)) e^{-\alpha(1-\tau_s)} \\
    & \geq \left(h_0-M\tau_s\right)e^{-\alpha(1-\tau_s)}
\end{align*}
    where we have substituted the lower bound of $h(\psi(z_{\tau_s}, \tau_s))$ in the first inequality. According to the theorem statement $\tau_s \le \frac{h_0}{M}$ and $h_0 \ge 0$ so the inequality reduces to $h(\psi(z, 1.0)) > 0$ and the generative process is safe.
\end{proof}
\begin{remark}
    % Filtering the generative process with \eqref{eq_fm_cbf} makes the safe set attractive, meaning that even if the initial state is unsafe, the filter forces it to approach the boundary of the safe set exponentially. To achieve the conditions of Theorem \ref{thm_fmcbf} in practice, we suggest an empirical procedure. Increasing the value of $\alpha$ and reducing $\tau_s$ results in better safety margins.
    Filtering the generative process with \eqref{eq_fm_cbf} makes the safe 
    set attractive; {\color{black}even when $h_0 - M\tau_s < 0$, the filter exponentially drives the trajectory toward the safe set boundary. The condition $\tau_s \le h_0/M$ in Theorem \ref{thm_fmcbf} is equivalent to the empirical guideline of reducing $\tau_s$ and increasing $\alpha$; both relax the sufficient 
    condition.}
\end{remark}

\begin{algorithm}
\caption{Barrier-Enhanced Flow Matching Inference}
\label{alg_safe_vla}
\begin{algorithmic}[1]
\State \textbf{Input:} Image frames $\mathbf{I}$, language tokens $\mathbf{L}$,  $\kappa$, $\alpha$, $\tau_s$, barrier parameters (e.g. $\{\vec{n}, d\}$ or $\{p_{center}, r\}$)
\State Sample $z_{0.0} \sim Beta(\tau = 0; 1.5, 1)$
\For{$\tau = 0$ \textbf{to} $1.0$ \textbf{with step} $\Delta \tau$} \Comment{Denoising loop }
    \State $\breve{v}_\theta \gets \text{VLA}(\mathbf{I}, \mathbf{L}, z_\tau)$ 
    \If{$\tau \ge \tau_s$} 
        \State Compute $h(q_i)$ and $\nabla_q h(q_i);\ \forall i \in \{1, \dots, H\}$ 
        \State $h(z) \gets -\frac{1}{\kappa} \ln \left( \sum_{i=1}^{H} e^{-\kappa h(q_i)} \right)$ 
        \State $\lambda_i \gets e^{-\kappa(h(q_i) - h(z))}$ 
        \State $\frac{\partial h}{\partial \breve{z}} \gets [\lambda_1 \nabla_q h(q_1), \dots, \lambda_H \nabla_q h(q_H)]$ 
        \State $\breve{\delta}_s \gets \arg \min_{\delta} \frac{1}{2} \|\delta\|_2^2$ \Comment{Solve QP \eqref{eq_fm_cbf_const}}
        % \State \quad \textbf{s.t.} $\frac{\partial h}{\partial \breve{z}}(\breve{v}_\theta + \delta) \ge -\alpha h(z)$ 
    \Else
        \State $\breve{\delta}_s \gets \mathbf{0}$
    \EndIf
    \State $z_{\tau + \Delta \tau} \gets z_\tau + (\breve{v}_\theta + \breve{\delta}_s)\Delta \tau$ 
\EndFor
\State \Return $z_{1.0}$
\end{algorithmic}
\end{algorithm}
{\color{black}Enforcing} velocity limits to the generated trajectory is possible by adding more constraints to the QP \eqref{eq_fm_cbf}. This trajectory will be passed to the low-level controllers of the robot for execution and real-time control, so by limiting the velocity on the generated trajectory $z(\tau = 1.0)$, we cannot guarantee the joint angular velocity at the hardware level. However, this will significantly increase the quality and smoothness of the generated action chunks. To formulate this constraint, we use the sparse matrix $D \in \mathbb{R}^{(H-1)n \times Hn}$ that facilitates the derivation of finite difference joint velocities within a flattened action chunk:
\begin{align}
    \underbrace{
    \frac{1}{\Delta t} \begin{bmatrix} 
    -I_{n} & I_{n} & 0  & \dots & 0 \\
    0 & -I_{n} & I_{n}  & \dots & 0 \\
    \vdots & \vdots & \ddots & \ddots & \vdots \\
    0 & 0  & \dots & -I_{n} & I_{n} 
    \end{bmatrix}}_{D} (\breve z + \breve \delta) = \begin{bmatrix}
        \dot q_2 \\ \dot q_3 \\ \vdots \\ \dot q_H
    \end{bmatrix} \label{eq_joint_ang}
    % \underbrace{\left(\begin{bmatrix}
    %     q_1 \\ q_2 \\ \vdots \\ q_H
    % \end{bmatrix} + \breve \delta\right)}_{\breve{\delta}_s}
\end{align}
The identity matrix of size $n \times n$ is denoted by $I_n$. Note that the $\Delta t$, multiplied by the block diagonal matrix $D$, is the physical sampling time of the robot, which is different from the sampling time used in denoising $d\tau$. This parameter is inversely related to the frame rate of the VLA. We can use the sparse matrix $D$ to also add a smoothing objective to the cost function of \eqref{eq_fm_cbf} such that it minimizes the difference between consecutive actions. The final form of barrier-enhanced flow matching QP is shown below.
\begin{align}
    \breve\delta_s = \arg & \min_{\delta \in \mathbb{R}^{nH}} \;\;\frac{1}{2} \| \delta \|_2^2 + \lambda \|D \big(\breve z+(\breve v_\theta + \delta)\Delta \tau\big) \|^2 \label{eq_fm_cbf_const}\\
    \text{s.t.}& \quad \frac{\partial h}{\partial z}(\breve v_\theta+\delta) \ge -\alpha h(z) \nonumber \\
    &{\color{black} v_{\min} \le D\big(\breve z + (v_\theta+\delta)\Delta \tau\big) \le  v_{\max}} \nonumber
\end{align}
We use the weighted sum of the two quadratic terms in \eqref{eq_fm_cbf_const} with parameter $\lambda$, which is a tunable positive real number. The first term of the cost function minimizes the safety filter adjustments to the flow based on the activation of the CBF constraint, while the second term improves the smoothness of the generated action chunk. Fig. \ref{fig_vel_limit} shows an example of successful velocity limit enforcement with the additional constraints. Note that QP \eqref{eq_fm_cbf_const} is solvable in real-time and does not cause computation burden during inference. 
\begin{figure}
    \centering
    \includegraphics[width=0.8\linewidth]{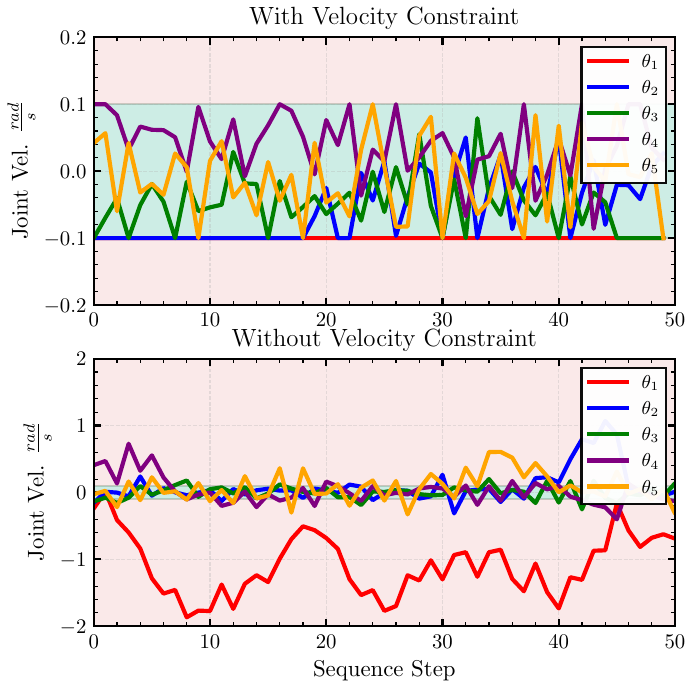}
    \caption{Effective velocity constraint enforcement via CBF QP within one action chunk. (sampled from one of the FM-CBF trials reported in Table \ref{table:results}) The velocity limit is defined as a box constraint ($\pm 0.1 \frac{rad}{s}$). }
    \label{fig_vel_limit}
\end{figure}
{\color{black}Majority of the generative models used by roboticist are expert in planning reference position and/or orientation of the system, as a result of this we focus on first order safety descriptions and their corresponding barrier function candidates. Two examples for manipulation tasks are:}

\textit{(A) Wall Barrier:} We can use this barrier function to restrict the placement of one frame (mostly end effector) from passing a 3-dimensional plane. For instance, we can use this barrier function to stop the robot from pushing its tool against its surrounding surfaces, like a table or walls. Assume the normal vector of the safe set boundary is a plane with normal vector $n$, having an offset of $d$ from the origin. The following barrier function can be used for keeping a specific robot frame on one side of the plane (e.g. Fig. \ref{fig_wall_exp}):
\begin{align}
    h(q) &= <\vec{n}, \;p_{\text{ee}}> - d \label{eq_wall_barrier} \\
    \nabla_q h(q) &= <\vec{n},\; J(q)>
\end{align}
where $<>$ denotes the vector dot product, $J(q)$ is the Jacobian of the arm for configuration $q$, and $p_{\text{ee}}$ is the Cartesian position of the end effector.

\textit{(B) Spherical Barrier:} If the user wants to keep one of the robot's frames outside the spherical region, we can formulate a barrier function based on the frame's proximity to the region's center ($p_{\text{center}}$) by the following barier function:
\begin{align}
    h(q) &= \| p_{\text{ee}}(q) - p_{\text{center}} \|_2^2 - r^2 \label{eq_spherical_barrier} \\
    \color{black}\nabla_q h(q) &= 2 (p_{\text{ee}}(q) - p_{\text{center}})^T J(q)
\end{align}
An example of this barrier is shown in Fig. \ref{fig_double_duck}. 
These barrier function candidates are mentioned as an example; the framework is compatible with other safety descriptions tailored for a wide variety of applications, such as multi-agent robotic systems \cite{multi_agent}, mobile robots \cite{molnar2021model}, and autonomous vehicles \cite{av_cite} if they comply with assumptions of Theorem \ref{thm_fm_error}.
\begin{remark}
    The aforementioned barrier functions and their time derivative could be evaluated for practical implementations without knowledge of the system's dynamics, since they are purely kinematic functions and concern the dimensions of the links and robot configuration only.
\end{remark}
%%%%%%%%%%%%%%%%%%%%%%%%%%%%%%%%%%%%%%%%%%%%%%%%%%%%%%%%%%%%%%%%%%%%%%%%%%%%%%%%
\section{Experimental Results}\label{sec_results}
\begin{figure}
    \centering
    \includegraphics[width=0.9\linewidth]{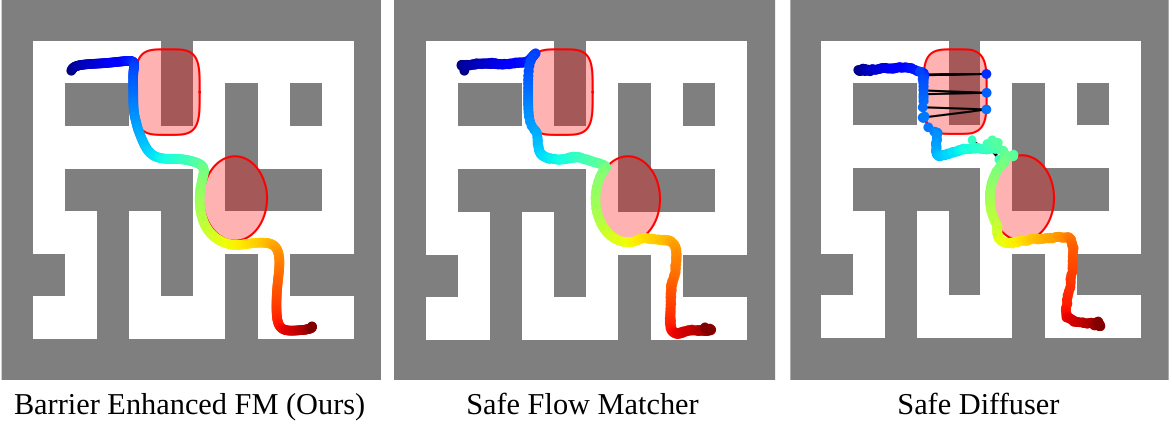}
    \caption{Sample trajectories generated by different methods for Maze 2D}
    \label{fig_maze}
\end{figure}
{\color{black}\subsection{Maze 2D with Denoising Probabilistic Models}
We evaluate our method against safe Flow Matcher \cite{yang2025safeflowmatcher} and Safe Diffuser \cite{xiao2023safediffuser} on a 2D Maze game with two unseen obstacles for safety purposes. Planning quality is measured via barrier safety (BS for two obstacles), per-plan compute time, trajectory curvature ($\kappa$), and trajectory acceleration; success rate is omitted since all methods reach 100\% over 100 trials. Table~\ref{table_maze2d} reports results with our method achieving the best trade-off across all three metrics. We used open source implementations of these methods, and used same hardware with equal computing power. The QP solver used in all inferences is the \textit{qpth} from publicly available python library. Aggregating the min operator in one constraint using \eqref{eq_log_exp} reduced the solver time; the smoothness and minimal jerk is the outcome of velocity bound and smoothing term in \eqref{eq_fm_cbf_const}.}
\begin{table}[t]
\centering
\caption{\color{black}Maze2D results (100 trials each, using qpth solver). }
\label{table_maze2d}
\setlength{\tabcolsep}{2.15pt}
\renewcommand{\arraystretch}{1.1}
\footnotesize
{\color{black}\begin{tabular}{l cccc cc}
\toprule
\textbf{Method} & \textbf{BS1} & \textbf{BS2} & \textbf{Trap } & \textbf{Time } & \textbf{$\kappa$} & \textbf{Accel.} \\
 & (${\geq}0$) & (${\geq}0$) & (\%) $\downarrow$ & (s) $\downarrow$ & $\downarrow$ & $\downarrow$ \\
\midrule
FM (no safety)                                & -0.762  & -0.938  & 0  & 1.38   & 97.7$\pm$1.5 & 151.3$\pm$4.8 \\
SafeDiffuser~\cite{xiao2023safediffuser}      & -0.003  & -0.003  & 69  & 14.41 & 68.2$\pm$90.3 & 124.5$\pm$34.2 \\
SafeFM~\cite{yang2025safeflowmatcher}         & -0.3031  & 0.003  & 12  & 14.14 & 75.9$\pm$2.9 & 195.9$\pm$22.0 \\
\textbf{CBF-FM (Ours)}                        & 0.109    & 0.046  & \textbf{0} & \textbf{10.65} & \textbf{7.2$\pm$0.4}  & \textbf{3.3$\pm$0.1} \\
\bottomrule
\end{tabular}}
\end{table}
\subsection{Manipulation Tasks with VLA}
We fine-tuned the $\pi_0$ pretrained model for a pick and place task using the SO-101 5 degrees of freedom (DoF) robotic arm with a two-finger gripper. A dataset of roughly 150 teleoperated trials, totaling 84k frames at 30 FPS, was used for the training process. {\color{black}The dataset did not involve any specific safety labeling. We also fine-tuned the model for a Quanser QArm robot which is a 4 DoF manipulator with a larger gripper on a dataset of 180k frames. To ensure the barrier gradients remain bounded away from zero (cf. Corollary 1), all objects are positioned strictly within the dexterous, singularity-free workspace of both manipulators.}
\begin{figure}
    \centering
    \includegraphics[width=0.9\columnwidth]{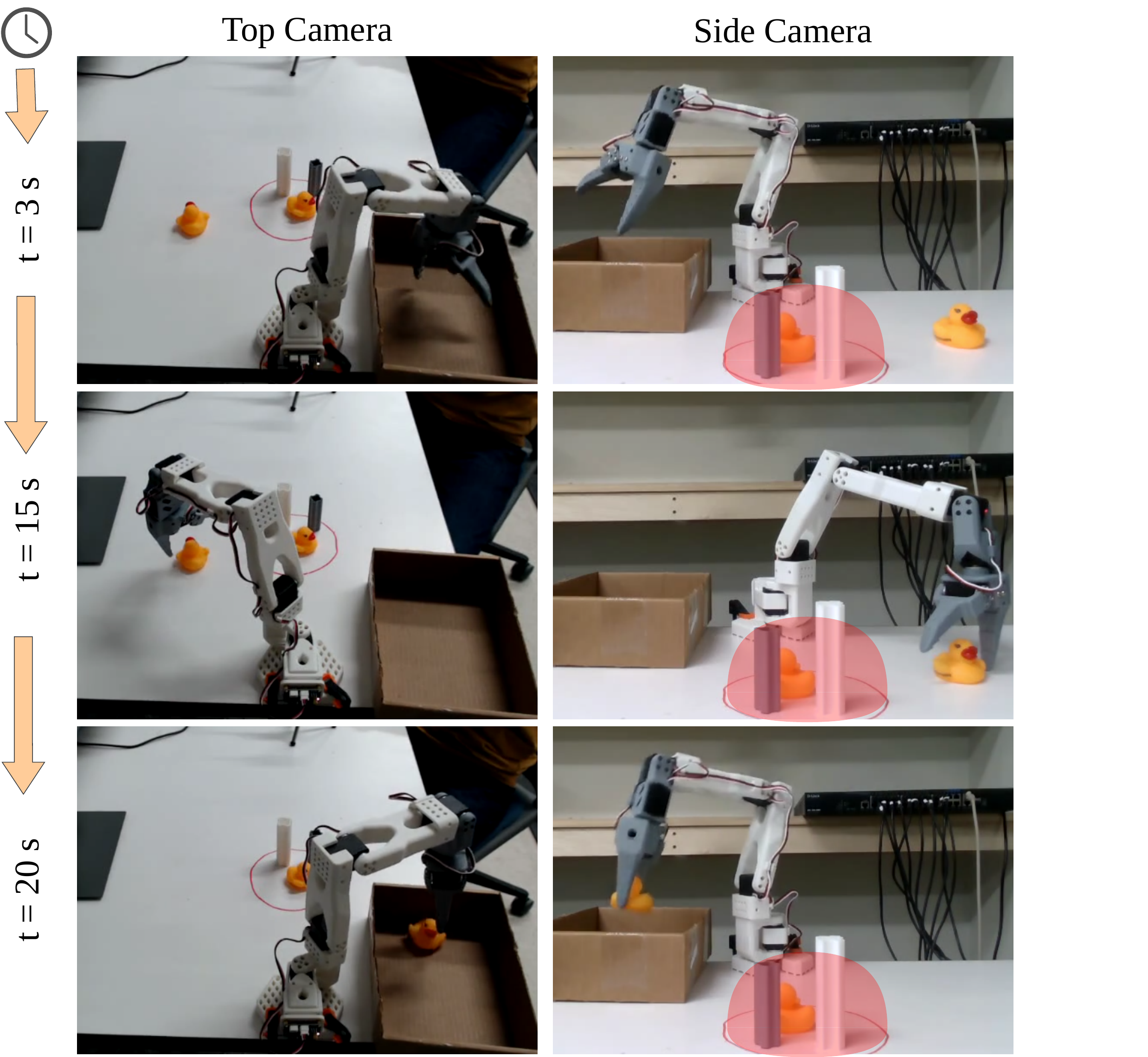}
    \caption{Rollout of safe VLA with CBF-Enhanced flow matching in an environment with obstacles. The VLA only picks up the duck that is inside the safe region and hesitates to grasp the object inside the collision zone (red sphere). {\color{black}Videos are available at \url{https://kassra-sinaei.github.io/safe-vla-webpage/}}}
    \label{fig_double_duck}
\end{figure}
Our proposed inference filter presented in Algorithm \ref{alg_safe_vla} is implemented via modification of the open source $\pi_0$ model from \cite{cadene2024lerobot}. Safety rate and success rate of the framework are measured simultaneously in experiments where {\color{black}objects are} placed inside and outside of the safe region during each trial. {\color{black}This scenario challenges safety filter and modified model verifying our safety filter does not deteriorate performance of the base VLA.} During each of the rollouts, we did one of the following:
\begin{itemize}
    \item The object is initially in the unsafe region and will be moved outside after a while. Ideally, the VLA is expected to hesitate grasping it when it is not safe and complete the task when it is moved to the safe region.
    \item {\color{black}There are multiple objects on the table to interact with. The robot should only manipulate the ones in the safe set and keeps its end-effector frame in the safe region.}
\end{itemize}
\begin{remark}
    If the robot manages to execute the task as instructed in the language input, the trial is counted as successful regardless of the collision and safety violations. Since the obstacles used in the experiment are not rigid, collisions are not fatal to the robot's health and will not terminate the experiment. This allows better evaluation of the proposed framework because the safety and success are independently evaluated in each trial, and the results are reported in Table \ref{table:results}.
\end{remark}
{\color{black}To verify the reliability and efficacy of the framework we repeated the manipulation experiment on both robots and compared the baseline VLA performance against post-hoc filter \eqref{eq_end2end_filter} and our framework (Algorithm \ref{alg_safe_vla}). Each method is repeated at least 20 times and} the measured safety rate and success rates are reported in Table \ref{table:results}. We can see that the barrier-enhanced flow matching is reliably generating safe trajectories without degrading the success rate of the base model. Additionally, the smoothing cost and joint velocity constraints in the QP \eqref{eq_fm_cbf_const} yield noticeably smoother, less jerky motions.
% Experiment videos are available at \url{https://safe-vla-fmcbf.netlify.app}.
\begin{figure}
    \centering
    \includegraphics[width=0.95\columnwidth]{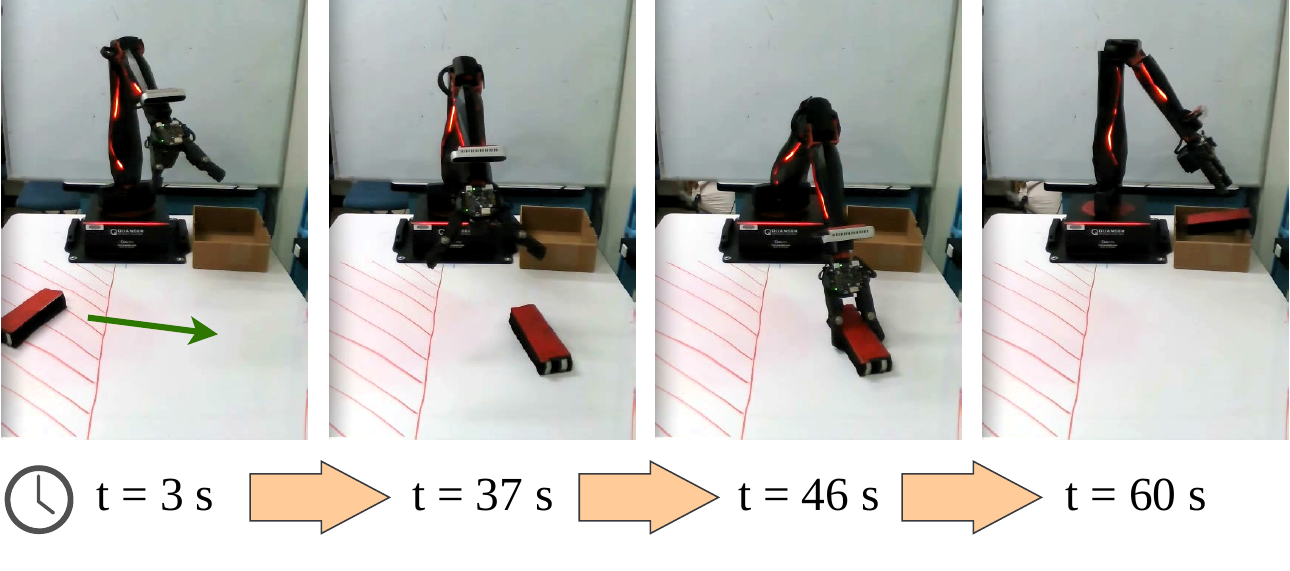}
    \caption{\color{black}Rollout of safe VLA with wall barriers performing table cleanup task. Barrier-enhanced FM does not allow the gripper to enter unsafe region. }
    \label{fig_wall_exp}
\end{figure}
\begin{table}[t]
\centering
\caption{\color{black}Hardware manipulation results across two robotic platforms.}
\label{table:results}
\setlength{\tabcolsep}{3.0pt}
\renewcommand{\arraystretch}{1.1}
\color{black}
\begin{tabular}{ll ccc}
\toprule
\textbf{Platform} & \textbf{Metric} & \textbf{No Filter} & \textbf{E2E-CBF} & \textbf{CBF-FM (Ours)} \\
\midrule
\multirow{2}{*}{\shortstack[l]{SO-101 (5-DoF)\\\textit{Pick-and-Place}}}
    & Safety (\%)  & 15.0 & 68.2 & \textbf{100.0} \\
    & Success (\%) & 75.0 & 68.2 & \textbf{77.4}  \\
\cmidrule(lr){1-5}
\multirow{2}{*}{\shortstack[l]{QArm (4-DoF)\\\textit{Table Cleanup}}}
    & Safety (\%)  & 0.00 & 70.0 & \textbf{100.0}  \\
    & Success (\%) & 100.0 & 75.0 & \textbf{100.0}  \\
\bottomrule
\end{tabular}
\end{table}
%%%%%%%%%%%%%%%%%%%%%%%%%%%%%%%%%%%%%%%%%%%%%%%%%%%%%%%%%%%%%%%%%%%%%%%%%%%%%%%%
\section{Conclusions}
{\color{black}We presented barrier-enhanced flow matching, a framework integrating control barrier functions (CBFs) with Vision-Language-Action (VLA) models. This approach enforces safety over action chunks without degrading the baseline VLA’s task completion rate. By utilizing an efficient Log-Sum-Exp QP formulation with explicit velocity limits, our method achieves faster computation times and superior planning quality (smoother trajectories with lower acceleration and curvature penalties) compared to existing generative safety filters. Crucially, it endows pre-trained VLAs with modular safety guarantees without requiring retraining or safety-specific datasets. Future work will extend this framework to handle highly dynamic obstacles and abstract safety descriptions beyond spatial collisions, further streamlining the safe deployment of generalist robotic policies.}
% \addtolength{\textheight}{-12cm}   

% \section*{ACKNOWLEDGMENT}
% Add acknowledgments here if exists any.

%%%%%%%%%%%%%%%%%%%%%%%%%%%%%%%%%%%%%%%%%%%%%%%%%%%%%%%%%%%%%%%%%%%%%%%%%%%%%%%%

\bibliographystyle{IEEEtran} 
\bibliography{references}

\end{document}